\newtheorem{theorem}{Theorem}
\title{Semiotic Aggregation in Deep Learning}
\author{
 Mu\c sat Bogdan\\
  Transilvania University of Bra\c sov\\
  Xperi Corporation\\
  
  \texttt{bogdan\_musat\_adrian@yahoo.com} \\
  %% examples of more authors
   \And
 Andonie R\u azvan \\
  Central Washington University\\
  Transilvania University of Bra\c sov\\
  \texttt{razvan.andonie@cwu.edu} \\
  
  %% \AND
  %% Coauthor \\
  %% Affiliation \\
  %% Address \\
  %% \texttt{email} \\
  %% \And
  %% Coauthor \\
  %% Affiliation \\
  %% Address \\
  %% \texttt{email} \\
  %% \And
  %% Coauthor \\
  %% Affiliation \\
  %% Address \\
  %% \texttt{email} \\
}
\begin{document}

\maketitle

\begin{abstract}
Convolutional neural networks utilize a hierarchy of neural network layers. The statistical aspects of information concentration in successive layers can bring an insight into the feature abstraction process. We analyze the saliency maps of these layers from the perspective of semiotics, also known as the study of signs and sign-using behavior. In computational semiotics, this aggregation operation (known as superization) is accompanied by a decrease of spatial entropy: signs are aggregated into supersign. Using spatial entropy, we compute the information content of the saliency maps and study the superization processes which take place between successive layers of the network. In our experiments, we visualize the superization process and show how the obtained knowledge can be used to explain the neural decision model. In addition, we attempt to optimize the architecture of the neural model employing a semiotic greedy technique. To the extent of our knowledge, this is the first application of computational semiotics in the analysis and interpretation of deep neural networks.
\end{abstract}

\section{Introduction} \label{sec:introduction}

%Context
Convolutional neural networks (CNNs) were first made popular by Yann LeCun \emph{et al.} \cite{726791} with their seminal work on handwritten character recognition, where they introduced the currently popular LeNet-5 architecture. At that time, computational power constraints and lack of data prohibited those CNNs from achieving their true potential in terms of computer vision capabilities. Years later, Krizhevsky \emph{et. al} \cite{NIPS2012_4824} marked the start of the current deep learning revolution, when during the ILSVRC 2012 competition their CNN, entitled AlexNet, overrun its competitor from the previous year by a margin of almost 10\%. Since then, research on novel CNN architectures became very popular producing candidates like VGG \cite{Simonyan15}, GoogleNet \cite{DBLP:journals/corr/SzegedyLJSRAEVR14}, ResNet \cite{DBLP:journals/corr/HeZRS15}, and more recently EfficientNet \cite{DBLP:journals/corr/abs-1905-11946}.

Despite the ability of generating human-alike predictions, CNNs still lack a major component: interpretability. Neural networks in general are known for their black-box type of behavior, being capable of capturing semantic information using numerical computations and gradient-based learning, but hiding the inner working mechanisms of reasoning. However, reasoning is of crucial importance for areas like medicine, law, finance, where most decisions need to come along with good explanations for taking one particular action in favor of another. Usually, there is a trade-off between accuracy and interpretability. For instance, extracted  IF-THEN rules from a neural network are highly interpretable but less  accurate.

Since the emergence of deep learning, there have been efforts to analyze the interpretability issue and come up with potential solutions that might equip neural networks with a sense of causality \cite{DBLP:journals/corr/SelvarajuDVCPB16, journals/corr/SimonyanVZ13, DBLP:journals/corr/ZeilerF13, guided_backprop, DBLP:journals/corr/SmilkovTKVW17, DBLP:journals/corr/ZhouKLOT15, DBLP:journals/corr/abs-1709-02495}. The high complexity of deep models makes these models hard to interpret. It is not feasible to extract (and interpret) classical IF-THEN rules from a ResNet with over 200 layers. 

We need different interpretation methods for deep models and an idea comes from image processing/understanding. A common technique for understanding the decisions of image classification systems is to find regions of an input image that were particularly influential to the final classification. This technique is known under various names:  sensitivity map, saliency map, or pixel attribution map. We will use the term \emph{saliency map}. Saliency maps have long been present and used in image recognition. Essentially, a saliency map is a 2D topological map that indicates visual attention priorities.  Applications of saliency maps include image segmentation, object detection, image re-targeting, image/video compression, and advertising design \cite{DBLP:journals/corr/abs-1709-02495}. 

Recently, saliency maps became a popular tool for gaining insight into deep learning. In this case, saliency maps are typically rendered as heatmaps of neural layers, where "hotness" corresponds to regions that have a big impact on the model’s final decision. We illustrate with an intuitive gradient-based approach, the Vanilla Gradient algorithm \cite{journals/corr/SimonyanVZ13}, which proceeds as follows: forward pass with data, backward pass to input layer to get the gradient, render the gradient as a normalized heatmap.

%Motivation
Certainly, saliency maps are not the universal tool for interpreting neural models. They focus on the input and may neglect to explain how the model makes decisions. It is possible that saliency maps are extremely similar for very different output predictions of the neural model. An example was provided by Alvin Wan\footnote{https://bair.berkeley.edu/blog/2020/04/23/decisions/\#fn:saliency} using the Grad-CAM (Gradient-weighted Class Activation Mapping) saliency map generator \cite{DBLP:journals/corr/SelvarajuDVCPB16}. In addition, some widely deployed saliency methods are incapable of supporting tasks that require explanations that are faithful to the model or the data generating process. Relying only on visual assessment of the saliency maps can be misleading and two tests for assessing the scope and quality of explanation methods were introduced in \cite{Adebayo2018}.

A good visual interpretation should be  class-discriminative (i.e., localize the category in the image) and high-resolution (i.e., capture fine-grained details) \cite{DBLP:journals/corr/SelvarajuDVCPB16}. Guided Grad-CAM \cite{DBLP:journals/corr/SelvarajuDVCPB16} is an example of a visualization which is both high-resolution and class-discriminative: important regions of the image which correspond to any decision of interest are visualized in high-resolution detail even if the image contains evidence for multiple possible concepts.

%Contribution
In our approach, we focus on the statistical aspects of the information concentration  processes which appear in the saliency maps of successive CNN layers. We analyze the saliency maps of these layers from the perspective of semiotics. In computational semiotics, this aggregation operation (known as superization) is accompanied by a decrease of spatial entropy: signs are aggregated into supersign. A saliency map aggregates information from the previous layer of the network. In computational semiotics, this aggregation operation is known as \emph{superization}, and it can be measured by a decrease of spatial entropy. In this case, signs are synthesized into supersign. 

Our contribution is an original, and to our knowledge, the first application application of computational semiotics in the analysis and interpretation of deep neural networks. \emph{Semiotics} is known as the study of signs and sign-using behavior. According to \cite{Gudwin2005a}, \emph{computational semiotics} is an interdisciplinary field which proposes a new kind of approach to intelligent systems, where an explicit account for the notion of sign is prominent. In our work, the definition of computational semiotics refers to the application of semiotics to artificial intelligence. We put the notion of sign from semiotics into service to give a new interpretation of deep learning, and this is new.  We use computational semiotics' concepts to explain decision processes in CNN models. We also study the possibility of applying semiotic tools to optimize the architecture of deep learning neural networks. Currently, model architecture optimization is a hot research topic in machine learning.

The inputs for our model are saliency maps, generated for each CNN layer by Grad-CAM, which currently is a state-of-the-art method. We compute the entropy of the saliency maps, meaning that we quantify the information content of these maps. This allows us to study the superization processes which take place between successive layers of the network. In our experiments, we  show how the obtained knowledge can be used to explain the neural decision model. In addition, we attempt to optimize the architecture of the neural model employing a semiotic greedy technique.

%Structure of the paper
The paper proceeds as follows. Section \ref{sec:saliency} describes the visualization of CNN networks through saliency maps, with a focus on the Grad-CAM method used in our approach. Image spatial entropy and its connection to saliency maps is presented in Section \ref{sec:entropy}. Section \ref{sec:semiotic} introduces semiotic aggregation in the context of deep learning. Section \ref{sec:signs} concentrates the conceptual core of or contribution - the links between semiotic aggregation and CNN saliency maps. The experimental results are described in Section \ref{sec:experiments}.  Section \ref{sec:visualization} discusses how semiotic aggregation could be used to optimize the architecture of a CNN. Section \ref{sec:conclusions} contains final remarks and open problems.

\section{Saliency Maps in CNNs} \label{sec:saliency}
This section describes the most recent techniques used for the visualization of CNN layers. Overviews of saliency models applied to deep learning networks can be found in \cite{DBLP:journals/corr/abs-1709-02495, DBLP:journals/corr/SelvarajuDVCPB16}.

One of the earliest works belong to Zeiler \emph{et al.} \cite{DBLP:journals/corr/ZeilerF13}. They used a Deconvolutional Network (deconvnet) to visualize the relevant parts from an input image that excite the top 9 most activated neurons from a feature map (i.e. the output of a convolutional layer, a maxpool layer, a nonlinear activation function) resulted at a particular layer. A deconvnet represents a map from the hidden neuron activities back to the input pixel space by means of inverting the operations done in a CNN: unpooling, rectifying (using ReLU), and convolutional filtering. Their visualization technique strengthened the intuition that convolutional filters do indeed learn hierarchical features, starting from simple strokes and edges to object parts and in the end composing whole objects, as the depth of the network increases. Springenberg \emph{et al.} \cite{guided_backprop} demonstrated that by slightly changing the way the gradient through the ReLU non-linearity is computed - by discarding negative values (guided backpropagation) - they can visualize convolutional filters for a CNN with strided convolution instead of pooling. 

\begin{center}
\includegraphics[width=0.6\textwidth]{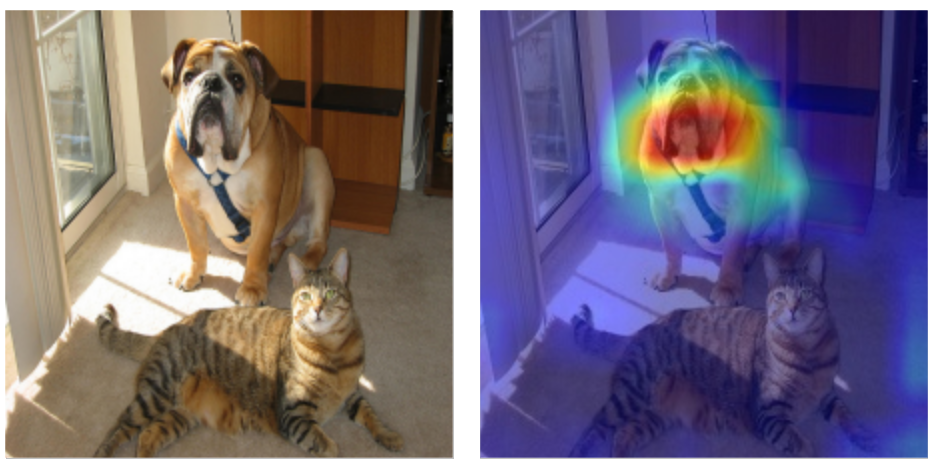}
\captionof{figure}{A saliency map (generated using the Grad-CAM method) which highlights the most important pixels that contribute to the prediction of the class "boxer" (dog). Red denotes important regions.}
\label{saliency}
\end{center}

In visual recognition, a saliency map (e.g., Figure \ref{saliency}) can capture the most important or salient features (pixels) of an input image which are responsible for a particular decision. In case of a CNN classifier,  this decision  translates into finding the class with the maximum likelihood score. As can be seen in Figure \ref{saliency}, the saliency map can be represented as a heatmap, where the intensity represents the importance of the features. 

The notion of saliency map is not novel and has been used even before the emergence of CNNs \cite{Lin2010}. In the context of CNNs, the work of Simonyan \emph{et al.} \cite{journals/corr/SimonyanVZ13} was among the first ones to explore saliency maps by using as a signal the backpropagated gradients with respect to the input image. Higher magnitudes of the gradient tensor corresponded to higher importance of the respective pixels. In deep CNNs, the class-aware gradient signal is mostly lost while moving backwards to the input of the network. Thus, for a lot of images, the resulted saliency maps in \cite{Lin2010} were noisy and difficult to interpret. The same paper introduced a method for generating class specific images: apply gradient descent on a random input noise image until convergence, in order to maximize the likelihood of a class. The resulting images managed to capture some of the semantics of a real image belonging to that class. 

A related approach is SmoothGrad, proposed by Smilkov \emph{et al.} \cite{DBLP:journals/corr/SmilkovTKVW17}, where the gradient corresponding to an input image is computed as the average of the gradients of multiple samples obtained by adding Gaussian noise to the original input image. This has the effect of smoothing the resulted gradient with a Gaussian kernel by means of a stochastic approximation, resulting in a less noisy saliency map.

Grad-CAM, a recent popular technique for saliency map visualization \cite{DBLP:journals/corr/SelvarajuDVCPB16}, uses the gradient information obtained from backpropagating the error signal from the loss function with respect to a specific feature map $A^{(l)} \in R^{w \times h \times c}$, at any layer $l$ of the network, where $w, h$ and $c$ represent the width, height and number of channels of that feature map, respectively. The gradient signal is averaged over the spatial dimensions $w \times h$ to obtain a $c$-dimensional vector of importance weights $\alpha_k$. The importance weights are used to perform a weighted channel-wise combination with the feature maps $A^{(l)}_k$ and ultimately passed through a ReLU activation function:

\begin{equation} \label{sal-output}
    O_{Grad-CAM}^{(l)} = ReLU \bigg ( \sum_{k=0}^c \alpha_k A_k^{(l)} \bigg)
\end{equation}

$O_{Grad-CAM}^{(l)}$ in Formula (\ref{sal-output}) is the output resulted by applying the Grad-CAM technique on a particular layer $l$. By normalizing the values between $[0, 1]$ using the min-max normalization scheme and then multiplying by $255$ it will result a map of pixel intensities $g \in [0, 255]$, where $255$ denotes maximum importance and $0$ denotes no importance.

The ReLU activation function is applied because only features that have a positive influence on the class of  interest usually matter. Negative values are features likely to belong to other categories in the image. In \cite{DBLP:journals/corr/SelvarajuDVCPB16} the authors justify that without the ReLU function, the saliency maps could sometimes highlight more than just the desired class of interest.

Grad-CAM can be used to explain activations in any layer of a deep network. In \cite{DBLP:journals/corr/SelvarajuDVCPB16}, it was  applied only to the final layer, to interpret the output layer decisions. In our experiments, we use Grad-CAM to generate the saliency maps of all CNN layers. 

\section{Image Spatial Entropy} \label{sec:entropy}
Our work analyzes the entropy variations of 2D saliency maps. For this, we need to compute the entropy of 2D structures. This is very different than the approach in \cite {Lin2010}, where saliency maps are obtained from local entropy calculation. Rather than generating maps using an entropy measure, we compute the entropy of saliency maps generated by the gradient method in Grad-CAM. 

The most trivial solution is to use the univariate entropy, which assumes all pixels as being independent and does not take into consideration the contextual aspect information. 

A more accurate model is the Spatial Disorder Entropy (SDE) \cite{Journel1993}, which considers an entropy measure for each possible spatial distance in an image. Let us define the joint probability of pixels at spatial locations $(i, j)$ and $(i + k,\:j + l)$ to take the value $g$, respectively $g'$:

\begin{equation} \label{formula:2}
    p_{gg'}(k, l) = P(X_{i,\:j} = g, X_{i + k,\:j + l} = g')
\end{equation}
where $g$ and $g'$ are pixel intensity values ($0-255$). If we assume that $p_{gg'}$ is independent of $(i, \:j)$ (the homogeneity assumption \cite{Journel1993}), we define for each pair $(k,\:l)$ the entropy

\begin{equation}
    H(k,\:l) = - \sum_{g} \sum_{g'} p_{gg'} (k,\:l) \log p_{gg'} (k,\:l)
\end{equation}
where the summations are over the number of outcome values (256 in our case). A standardized relative measure of bivariate entropy is \cite{Journel1993}:

\begin{equation} \label{eq:4}
    H_R (k,\:l) = \frac{H(k,\:l) - H(0)}{H(0)} \in [0,\:1]
\end{equation}

The maximum entropy $H_R (k,\:l)=1$ corresponds to the case of two independent variables. $H(0)$ is the univariate entropy, which assumes all pixels as being independent, and we have $H(k,\:l) \geq H(0)$. 

Based on the relative entropy for $(k,\:l)$, the SDE for an $m \times n$ image $\textbf{X}$ was defined in \cite{Journel1993} as:

\begin{equation} \label{eq:5}
    H_{SDE} (\textbf X) \approx \frac{1}{mn} \sum_{i = 1}^m \sum_{j = 1}^n \sum_{k = 1}^m \sum_{l = 1}^n H_R (i - k,\:j - l)
\end{equation}

For $k, \: l >> 1$, $H_R (k,\:l)$ is always equal or very close to one. Consequently, $H_{SDE}$ is usually very close to one (the max value) for most images, which is not convenient for our purpose. In addition, the complexity of SDE computation is high. 

For these reasons, we decided to use a simplified version - the Aura Matrix Entropy (AME, see \cite{Volden1995}), which only considers the second order neighbors from the SDE computation:

\begin{align} \label{eq:6}
\begin{split}
    H_{AME} (\textbf X) \approx \frac{1}{4} \bigg( H_R (-1,\:0) + H_R (0,\:-1) + H_R (1,\:0) + H_R (0,\:1) \bigg)
\end{split}
\end{align}

The additional assumption is that the image is isotropic, which causes different orientations of the neighboring pixels to have the same entropy. In other words, the joint pdf of the vertical and horizontal neighboring process is averaged to obtain a global joint pdf of the image. This averaging makes the resulting pdf smoother and more equally distributed throughout the entire sample space. In a comparison study \cite{Razlighi2009b}, the AME measure provided the most effective outcome among several other image spatial entropy definitions, even if it overestimates the image information.

 Putting it all together, starting from a map obtained by Formula (\ref{sal-output}), we compute the probabilities $p_{gg'}$ in Formula (\ref{formula:2}), and finally the AME in Formula (\ref{eq:6}).

\section{Semiotic Aggregation in Deep Learning} \label{sec:semiotic}
We aim to introduce in this section the semiotic framework used to analyze visual representations (saliency maps) of multi-layered neural networks. Our main operation is aggregation, applied layer-wise in such networks. The basic computational tool is information theory, but the aggregation operation is applied in a semiotic framework and this makes our contribution interdisciplinary.

In semiotics (or \emph{semiosis}), a \emph{sign} is anything that communicates a meaning, that is not the sign itself, to the interpreter of the sign. This definition is very general. Alternative in-depth definitions can be found in \cite{Eco1976,Sebeok1994,Chandler2017}. We consider the triadic model of semiosis, as stated by Charles Sanders Peirce.
Peirce defined semiosis as an irreducible triadic relation between Sign-Object-Interpretant \cite{Peirce1960}. 

Charles Morris  \cite{Morris1972} defined semiotics as grouped into three branches:
\begin{itemize}
\item Syntactics: relations among or between signs in formal structures without regard to meaning.
\item Semantics: relation between signs and the things to which they refer; their signified denotata, or meaning.
\item Pragmatics: relations between the sign system and its human (or animal) user.
\end{itemize}

In a simplistic manner, semiotics already played some role in computer science during the sixties. The distinction of syntactics, semantics, and pragmatics by Charles Morris was at that time imported into programming language theory \cite{Zemanek1966}. More recent results can be found in \cite{Tanaka2010}.

Computational semiotics is built upon a mathematical description of concepts from classic semiotics. In \cite{Gudwin1997}, is was stated that semantic networks can implement computational intelligence models: fuzzy systems, neural networks and evolutionary computation algorithms. Later, some computational model of Peirce’s triadic notion of meaning processes were proposed \cite{Gomes2007, Gudwin2002, Gudwin2005a, Gudwin2005b}. 

Taking advantage of Peircean semiotics and recent results in cognitive science, Baxter \emph{et al.} proposed a unified framework for the interpretation of medical image segmentation as a sign exchange in which each sign acts as an interface metaphor \cite{Baxter2018}. This framework provides a unified approach to the understanding and development of medical image segmentation interfaces. A complete computational model of Peirce’s semiosis is very complex and still not available.

According to Mihai Nadin, almost all inference engines deployed today in machine learning encode semiotic elements, although at times, those who designed them are rather driven by semiotic intuition than by semiotic knowledge \cite{Nadin2011, Nadin2017}.  

Recently, there is a huge interest in self-explaining machine learning models. This can be regarded as  exposure of the self-interpretation and semiotic awareness mechanism. The concept of sign and semiotics offers a very promising and tempting conceptual basis to machine learning. 

In this work, we focus on computational aspects of semiotics in deep learning. Our semiotic infrastructure is at the intersection of Peirce's theory and information theory, a theory developed by Max Bense \cite{Bense1975} and Helmar Frank \cite{Frank1969}.

The usual signs designate material entities which are unconsciously perceived. These so-called \emph{first level signs} may be agglomerated into signs at the next hierarchical level, called \emph{second level supersigns}. Iterating the process, we obtain more abstract \emph{$k$-th level supersigns}. The transition from $k$-th level to $(k+1)$-th level supersigns in called \emph{superization}. Frank \cite{Frank1969} identified two types of superization:

\begin{enumerate}
\item \textbf{Type I} "Durch Klassesbildung" (by class formation, in German): building equivalence classes and thus reducing the number of signs. The letters of a text may be considered first level signs. The equivalence class of all types of letter "a" (handwritten, capital, and so on) is a second level supersign.
\item \textbf{Type II} "Durch Komplexbildung" (by compound formation, in German): building compound supersigns from simpler component supersigns. Reconsidering the previous example, we may obtain this way words from letters, sentences from words, and more and more complex and abstract syntactic-semantic structures afterward.
\end{enumerate}

Superization is a semiotic aggregation process characterized at each perception level by a specific repertory of supersigns. Hierarchical computer vision data structures (e.g., quadtrees, multi-resolution pyramids) may be considered simplistic superizations \cite{Andonie1987a, Andonie1987b}. The basic idea is to treat each component as a pixel at the given hierarchical level. In this case, there is a similarity between hierarchical aggregative representation and superization processes. However, there are also differences: superizations are not simple combinatorial processes, but subtle syntactic-semantic perception frames related to Peirce's triadic model of semiosis.

A multi-resolution image representation can be characterized at each level by an information measure. The resolution-dependent Shannon entropy can be derived from the probability distribution of grey-level events observed at that level \cite{Wong1977}. Using the newspaper's reading analogy, at the magnified level, where only white and black patches are visible, the entropy $H$ will be low. As the picture is brought to normal focusing distance, a great variety of grey levels become apparent, and consequently, the entropy increases. As the picture is moved further away from the eyes, the entropy decreases. Finally, it may become nearly uniformly grey in appearance, with $H \approx 0$. The observation that associates with the peak value of the entropy is one of the most meaningful observations of the picture. However, because of other factors, the maximum entropy is not always associated with the "optimal" resolution \cite{Andonie1987b}.

From an informational psychology view, the entropy increases until it reaches its peak value. In our opinion \cite{Andonie1987a, Andonie1987b}, this phase may be associated to the informational adaptation of the perceiver. The subsequent entropy decrease is related to the processing of structural information \cite{Wong1977}. The rate of decrease depends largely upon the amount of structural information in the picture. The entropy falls quickly when little structural information is available, whereas when major structural information is present, the entropy will remain high over most of its range. The variation of entropy can indicate the type and quantity of structural information in the picture in terms of size and relationships to detailed features. In the current study, we focus only on the entropy decrease phase, since the analyzed CNNs do not adapt to the inputs by changing dynamically the input image resolution.

The idea of considering the CNN layers as multi-resolution representations of the input images is interesting, but not very new \cite{He2014, Kokkinos2017, Chen2017}. For instance, in \cite{He2014} a spatial pyramid pooling layer is introduced between convolutional layers and fully connected layers to avoid the need for cropping or warping of the input images. In \cite{Chen2017}, the incoming convolution layers at multiple sampling rates are applied to the convolutional layers to capture objects as well as image context at multiple scales.

In our approach, we consider the multi-resolution image representation example in the context of a semiotic recognition process, where the machine (or the interpretant) attempts to classify an input image. We imagine the recognition process as a feedforward multi-layer neural classifier where each layer performs a  superization of the previous layer. We assume that  the subjective information (measured by the entropy) is made available to an interpretant (i.e., the computer or the human supervisor) who attempts to classify the input image. 

Let us consider the entropies computed at two successive layers: $H_k$ and $H_{k+1}$. The extracted information by the interpretant can be measured by the difference $H_k - H_{k+1}$. Details can be found in \cite{Stan1977}. We have the following result:

\begin{theorem} (from \cite{Frank1969}):
Superization tends to concentrate information by decreasing entropy.
\end{theorem}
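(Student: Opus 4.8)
The plan is to recast the semiotic notion of superization as a deterministic coarsening map and then invoke the standard information-theoretic fact that passing to a function of a random variable cannot increase Shannon entropy. Concretely, I would model the level-$k$ repertory as a random sign $X$ taking values in a finite set $\mathcal{S}_k$ with distribution $p$, and the superization as a surjection $\phi : \mathcal{S}_k \to \mathcal{S}_{k+1}$ that sends each sign to the supersign it belongs to. The level-$(k+1)$ supersign is then $Y = \phi(X)$, so that $H_k = H(X)$ and $H_{k+1} = H(Y)$. For Type~I superization the map $\phi$ is exactly the quotient map onto equivalence classes; for Type~II it is the map assigning to each admissible configuration of component signs its compound label. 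In both cases the induced supersign distribution is the push-forward $q(y) = \sum_{x \in \phi^{-1}(y)} p(x)$.

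The engine of the argument is the grouping (branching) property of entropy, equivalently the chain rule. Since $Y$ is a deterministic function of $X$, we have $H(Y \mid X) = 0$, so the joint entropy satisfies $H(X, Y) = H(X) + H(Y \mid X) = H(X)$. Expanding the same joint entropy the other way gives $H(X, Y) = H(Y) + H(X \mid Y)$. Combining the two identities yields
\begin{equation*}
    H_k = H(X) = H(Y) + H(X \mid Y) = H_{k+1} + H(X \mid Y).
\end{equation*}
Because conditional entropy is non-negative, $H(X \mid Y) \geq 0$, we conclude $H_{k+1} \leq H_k$: superization cannot increase entropy. The leftover term $H(X \mid Y) = \sum_y q(y)\, H(X \mid Y = y)$ measures precisely the intra-class uncertainty discarded when signs are collapsed into supersigns, which is the information that the aggregation \emph{concentrates}.

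The equality case clarifies the hedge \emph{tends} in the statement: $H_{k+1} = H_k$ holds exactly when $H(X \mid Y) = 0$, that is, when each fibre $\phi^{-1}(y)$ carries at most one sign of positive probability, so that no genuine aggregation occurs. Whenever superization truly merges distinct signs that are each realized with positive probability, the inequality is strict and entropy decreases.

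I expect the main obstacle to be not the information-theoretic inequality, which is routine, but the modeling step that licenses it: one must argue that superization is faithfully represented by a deterministic coarsening $\phi$ rather than by a stochastic or information-creating map. Type~I fits this picture immediately, since equivalence-class formation is literally a quotient. Type~II is more delicate, because compound formation superficially enlarges the alphabet (letters give rise to many words); the point to establish is that the relevant comparison is between the component-level distribution and the realized distribution over admissible compounds, which is far more concentrated than the product distribution over all component combinations. Making this identification precise — and connecting it back to the spatial-entropy setting, where $X$ ranges over neighbouring-pixel configurations rather than isolated symbols — is where the real care is needed; once superization is cast as a coarsening, the entropy decrease follows from the chain rule as above.
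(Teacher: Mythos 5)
Your Type~I argument is correct and is essentially the paper's own step in more general clothing: the paper merges two signs $Z_{n-1}, Z_n$ into one class of probability $p_{n-1}+p_n$ and invokes the grouping inequality, and your chain-rule identity $H(X) = H(\phi(X)) + H(X \mid \phi(X)) \ge H(\phi(X))$ is exactly the general form of that grouping property for an arbitrary quotient map. So for class formation the two proofs coincide in substance.

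The genuine gap is in Type~II, and it is a wrong engine, not just a deferred detail. The paper's Type~II step is \emph{not} a coarsening argument: it takes two component repertories $X$ and $Y$, declares the compound supersign to be the joint object $Z = (X, Y)$, and invokes subadditivity, $H(X) + H(Y) \ge H(X, Y)$, i.e.\ non-negativity of the mutual information $I(X;Y)$. Your deterministic map $\phi$ from admissible configurations of component signs to compound labels cannot deliver this, because compound formation is (essentially) injective --- the word determines and is determined by its letters --- so on configurations you get $H(X \mid Y) = 0$ and your chain rule yields equality, $H_{k+1} = H_k$, rather than a decrease. The decrease asserted by the theorem for Type~II is measured against the \emph{sum} of the component entropies $H(X) + H(Y)$, not against the joint entropy $H(X, Y)$ of the configuration, and the gap between these two baselines is exactly $I(X;Y)$, a quantity your single-coarsening framework never produces. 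You come close to recognizing this in your final paragraph: the observation that the realized distribution over compounds is ``far more concentrated than the product distribution over all component combinations'' is precisely $D_{\mathrm{KL}}(p_{XY} \,\|\, p_X p_Y) = I(X;Y) \ge 0$ in disguise. But you treat it as a modeling subtlety to be absorbed into the coarsening picture, where it cannot be absorbed; the fix is to handle Type~II by a separate argument, invoking subadditivity of joint entropy directly, which is exactly what the paper does.
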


\begin{proof}[Proof of Theorem 1]
We consider separately the two types of superization. For a set $Z = (Z_1, \dots, \: Z_n)$ of supersigns with the corresponding probabilities $p_1, \dots, \: p_n$, $\sum p_i = 1$, using a superization of the first type, we may obtain supersigns of the next level $Z^* = (Z_1, \dots, \: Z_{n-2}, \: \{Z_{n-1}, \: Z_n\})$ with the corresponding probabilities $p_1, \dots, \: p_{n-2}, \: p_{n-1} + p_n$. We have the following inequality: $H(Z) = \sum p_i \: log \: p_i \ge H(Z^*)$.

For two sets of supersigns $X$ and $Y$, using the second type of superization, we obtain compound supersigns from the joint set $Z = (X, Y)$. A well-known relation completes the proof: $H(X) + H(Y) \ge H(Z)$.

\end{proof}

An intuitive application of this theorem is when we consider the neural layers of a CNN. A type I superization appears when we reduce the spatial resolution of a layer $k+1$ by subsampling layer $k$. This is similar to class formation because we reduce the variation of the input values (i.e., we reduce the number of signs). In CNNs, this is typically performed by a pooling operator. The pooling operator can be considered as a form of non-linear down-sampling which partitions the input image into a set of non-overlapping rectangles and, for each such sub-region, it computes its mean (average pooling) or max value (max pooling). The formula for max pooling applied to a feature map $F$ at layer $k$ and locations $(i, j)$ with a kernel of $2x2$ is:

\begin{equation}
    O_{i, j}(F) = max (F_{i, j}, F_{i+1, j}, F_{i, j+1}, F_{i+1, j+1})
\end{equation}

A type II superization is produced when applying a convolutional operator to a neural layer $k$. As an effect, layer $k+1$ will focus on more complex objects, composed of objects already detected by layer $k$. The convolutional operator for a feature map $F$ at layer $k$ and pixel locations $(i, j)$ with a $3x3$ kernel $W$ has the following formula:

\begin{equation}
    O_{i, j}(F) = \sum_{x=0}^2 \sum_{y=0}^2 F(i + x, i + y) W(x, y)
\end{equation}

The output $O$ of the convolutional operator is a linear combination of the input features and the learned kernel weights. Thus, a resulting neuron will be able to detect a combination of simpler object forming a more complex one, by composition of supersigns.

We observe that the effect of superization is a tendency of entropy decrease at each level. This is different than in the case of multi-resolution image representation. In \cite{Andonie1987b} we explained this difference by the following thesis: "The first level signs are perceived at a complexity level which corresponds to the "optimal" resolution." However, this thesis does not apply to a computer recognition model (a classifier), but to human perception.

In a simplified form, a multi-layered classifier can be interpreted from Morris' semiotic theory as a transition: syntactics-semantics-pragmatics.  At the end of a successful recognition process, the entropy of the output layer becomes 0 and no further information needs to be extracted. The last layer (the  fully connected layer in a CNN network) is connected to the outer world, the world of objects. This may be considered the pragmatic level in Morris' semiotic theory, since it shows the relation between the input signs and the output objects which can be related to decisions and actions.

\section{Signs and Supersigns in CNN Saliency Maps} \label{sec:signs}

Theorem 1 is a simplification of the superization processes taking place in the successive layers of saliency maps. We have both class formation and the compound formation superization, and the computed entropy is spatial. We calculate superizations at the level of saliency maps. In other words, our signs and supersigns refer to values computed in successive saliency maps computed by the Grad-CAM method. 

Our hypothesis is that at the core of a CNN, both types of superizations exist. For type I superization (by class formation), the pooling operation combines signs (scalar values) by criteria like average value or maximum value, resulting in a single sign, and thus reducing their number and building equivalence classes. Another potential interpretation of the pooling operation is that it builds equivalence classes by grouping spatially neighboring elements. In our experiments (as we will see in Section \ref{sec:experiments}), this phenomenon could be noticed after each pooling layer, where the magnitude of the spatial entropy of the saliency maps would have a big drop. Visually, the saliency maps start to become more concentrated around connected regions as more complex signs are formed.

For type II superization (by compound formation), it is known that CNNs compose whole objects starting from simple object parts \cite{DBLP:journals/corr/ZeilerF13}. This phenomenon describes exactly the second type of superization, as it builds compound supersigns from simpler component supersigns. They manage to do so by gradually enlarging the receptive field after each convolutional layer is applied. As the receptive field grows, a single neuron inside a hidden layer can cover a much larger region of interest from the input image and thus get activated for more and more complex objects. 

What complicates the interpretation in case of CNN networks is the fact that for some layers both superizations operate simultaneously and it can be difficult to separate their effects.

Our hypothesis is that in order to decrease the spatial entropy noticeably, the first type of superization is more effective, while the second type is more responsible with building supersigns with semantic roles, not affecting spatial entropy that much.

\section{Experiments} \label{sec:experiments}
The goal for the next experiments is to explore the variation of the spatial entropy of the saliency maps computed with Grad-CAM on some representative CNN architectures. We expect the entropy to decrease along with depth and that this can be related to type I superization processes.

We consider three standard network architectures: AlexNet \cite{NIPS2012_4824}, VGG16 \cite{Simonyan15},  and ResNet50 \cite{DBLP:journals/corr/HeZRS15}. In addition, we also study the entropy variation on a custom LeNet-5-like network\footnote{The original LeNet-5 was introduced in \cite{726791}.}.

We use the deep learning programming framework PyTorch \cite{NIPS2019_9015} (version 1.4.0) and the public implementation of Grad-CAM\footnote{\url{https://github.com/utkuozbulak/pytorch-cnn-visualizations}}, modified to our needs. Except the custom network, all CNNs are used as provided by the PyTorch repository, with their default pretrained weights. 

The experiments are performed in different contexts on the following datasets:

\begin{enumerate}
    \item A subset of ImageNet \cite{imagenet_cvpr09} composed of the "beaver" class from the training set, to test the pretrained and randomly initialized use-cases.
    \item CIFAR-10 \cite{cifar} to: \emph{a)} train the custom network without downsampling; and \emph{b)} test the newly trained network and a randomly initialized one, with the same architecture, using this dataset as a test set.
    \item "kangaroo" class from Caltech101 \cite{1384978} to test a network pretrained on ImageNet. The fact that we train and test on different (but somehow similar) datasets   can have an impact on the generalization performance of the network and expose possible overfitting on the training data. This is known as \emph{zero-shot} learning, and it can be viewed as an extreme case of domain adaptation.
    \item Caltech101 \cite{1384978} to test for the case where the network is pretrained on ImageNet, then trained (fine-tuned) on Caltech101. This is the \emph{transfer learning} approach.
\end{enumerate}

\subsection{Experiments on standard CNN architectures}

We present the experimental results for each of the considered CNN architectures. In the next tables, we use the following terms: (i) Pretrained - publicly available pretrained weights on ImageNet, (ii) Random - randomly initialized weights, (iii) Fine-tuning - fine-tuned weights starting from the pretrained ones trained on ImageNet, (iv) ImageNet - "beaver" class from the ImageNet training set, (v) Caltech101 - "kangaroo" class from the Caltech101 training set.

AlexNet \cite{NIPS2012_4824} is composed of a sequence of convolutional, max-pooling and ReLU layers, followed at the end by fully connected layers which linearly project the extracted features from the convolutional backbone to the desired number of output classes. Table \ref{table:alexnet} captures the experimental result values for each layer of the network. 

\begin{center}
\begin{tabular}{ |c| |c| |c| |c| |c| }
 \hline
 \multicolumn{5}{|c|}{AlexNet } \\
 \hline
 Layer & Pretrained & Random & Pretrained & Fine-tuning \\
       & ImageNet & ImageNet & Caltech101 & Caltech101 \\
 \hline
conv1 & 0.6830 & 0.6816 & 0.6786 & 0.6829 \\
relu1 & 0.6806 & 0.6802 & 0.6746 & 0.6795 \\
maxpool1 & 0.5252 & 0.5113 & 0.5264 & 0.5356\\
conv2 & 0.5311 & 0.5100 & 0.5395 & 0.5352 \\
relu2 & 0.5231 & 0.5096 & 0.5297 & 0.5191 \\
maxpool2 & 0.4147 & 0.3952 & 0.4241 & 0.4116 \\
conv3 & 0.4423 & 0.3861 & 0.4508 & 0.4474 \\
relu3 & 0.4326 & 0.3864 & 0.4437 & 0.4454 \\
conv4 & 0.4272 & 0.3867 & 0.4375 & 0.4292 \\
relu4 & 0.4214 & 0.3934 & 0.4222 & 0.4304 \\
conv5 & 0.4056 & 0.3934 & 0.4019 & 0.3925 \\
relu5 & 0.3928 & 0.3949 & 0.3878 & 0.3784 \\
maxpool3 & 0.3114 & 0.3038 & 0.3077 & 0.3071\\
\hline
\end{tabular}
\captionof{table}{Entropy values for saliency maps for AlexNet at different levels in the network}
\label{table:alexnet}
\end{center}

VGG16 \cite{Simonyan15} has a relatively simple and compact architecture, consisting of only $3 \times 3$ convolutions, max-pooling and ReLU, followed by multiple fully connected layers. The trick behind the VGG16 architecture is to use two $3 \times 3$ sequential convolution to replace a bigger $5 \times 5$ one, thus obtaining the same receptive field coverage by using less parameters. The caveat of VGG16 is that most of its parameters reside in the fully connected layers, making the network very parameter and memory inefficient. Table \ref{table:vgg} depicts the entropy values at different levels of the network.

\begin{center}
\begin{tabular}{ |c| |c| |c| |c| |c| }
 \hline
 \multicolumn{5}{|c|}{VGG16} \\
 \hline
 Layer & Pretrained & Random & Pretrained & Fine-tuning \\
 & ImageNet & ImageNet & Caltech101 & Caltech101 \\
 \hline
conv1 & 0.8516 & 0.785 & 0.8418 & 0.8369 \\
conv3 & 0.8017 & 0.7322 & 0.7883 & 0.7731 \\
conv5 & 0.6742 & 0.6308 & 0.6681 & 0.648 \\
conv10 & 0.5491 & 0.5155 & 0.5556 & 0.5429 \\
conv12 & 0.5112 & 0.5155 & 0.5127 & 0.4901 \\
conv13 & 0.4213 & 0.4035 & 0.4281 & 0.4135 \\
conv14 & 0.3868 & 0.4288 & 0.3994 & 0.3599 \\
maxpool5 & 0.3131 & 0.3443 & 0.3238 & 0.3086 \\
\hline
\end{tabular}
\captionof{table}{Entropy values for saliency maps for VGG16 at different levels in the network}
\label{table:vgg}
\end{center}

The novelty of ResNet \cite{DBLP:journals/corr/HeZRS15} stands in the residual connections which alleviate the vanishing gradient problem, an issue that followed deep neural networks since their early days. During backpropagation, gradients would start to gradually decrease in magnitude because of the chain rule applied to very small values, until they become $0$, and in consequence many layers would lack any gradient signal on which basis to update their respective weights. ResNet solves this problem by creating residual branches from an input block to an output block in the form of $y = x + f(x)$, where $x$ is the block's input and $f(x)$ is a sequence of multiple layers. Instead of learning a function, as in earlier architectures like AlexNet or VGG16, ResNets are trying to learn a residual for the input $x$, hence the name of the architecture. Entropy values for various layers are shown in Table \ref{table:resnet}.

For all three networks we observe a tendency of the spatial entropy to decrease, especially after max-pooling layers, which in our hypothesis are layers responsible for type I superization. Type II superization can be noticed by applying multiple consecutive convolutional layers. In this case, the spatial entropy does not necessarily decrease, but the general purpose is to enlarge the receptive field of the network, such that neurons activate for more complex objects while progressing through the layers.

Considering our above experiments and the well known fact that CNNs compose complex objects starting from simpler ones, this supports our hypothesis that type I superization is more effective for the entropy decrease. We did not notice a systematic entropy decrease for type II superization, and conclude that it is more responsible for building supersigns with semantic roles.

\begin{center}
\begin{tabular}{ |c| |c| |c| |c| |c| }
 \hline
 \multicolumn{5}{|c|}{ResNet50} \\
 \hline
 Layer & Pretrained & Random & Pretrained & Fine-tuning \\
 & ImageNet & ImageNet & Caltech101 & Caltech101 \\
 \hline
conv1 & 0.7854 & 0.6574 & 0.7705 & 0.7633 \\
block1 & 0.6849 & 0.5108 & 0.6807 & 0.6794 \\
block2 & 0.5912 & 0.4193 & 0.5901 & 0.582 \\
block3 & 0.4574 & 0.3398 & 0.4588 & 0.4607 \\
block4 & 0.2847 & 0.3019 & 0.2754 & 0.2862 \\
\hline
\end{tabular}
\captionof{table}{Entropy values for saliency maps for ResNet50 at different levels in the network}
\label{table:resnet}
\end{center}

To prove the benefic effects of transfer learning when fine-tuning, we also train starting from a random initialization. We use the Caltech101 dataset \cite{1384978}, since it consists of real images like the ones in ImageNet. The results for fine-tuning and training from scratch are available in Table \ref{table:finetuning}. For both experiments we use a learning rate of $0.001$ and train the networks for $100$ epochs. The training set consists of the full dataset, apart from $5$ random samples for each class, which are held for testing. The results when training from scratch are clearly worse than when fine-tuning from a strong baseline. Since the new training dataset is very small ($\approx 9000$ samples) compared to the size of ImagetNet ($\approx$ 1.3M samples), the network overfits on the training samples. This explains the weak performance when training from scratch.

\begin{center}
\begin{tabular}{ |c| |c| |c| }
 \hline
 \textbf{Network} & \textbf{Fine-tuning from ImageNet} & \textbf{Training from scratch} \\
 \hline
AlexNet & $83.168\%$ & $42.376\%$  \\
VGG16 & $87.327\%$ & $61.584\%$ \\
ResNet50 & $92.673\%$ & $43.168\%$ \\
\hline
\end{tabular}
\captionof{table}{Experimental results for accuracy on the Caltech101 dataset when fine-tuning from the available ImageNet pretrained weights versus starting from scratch.}
\label{table:finetuning}
\end{center}

\subsection{Experiments on a custom network}

Since all standard CNNs use a form of downsampling, either through strided convolutions or pooling, we notice that type I superization is always present. In these standard CNNs, both superization types are simultaneously present. The question is how to isolate the type II superization from the type I superization.

For this, we create a custom network by removing all spatial subsampling operations (strided convolutions and max-poolings) from original LeNet-5. This way, we remove the type I superization (class formation) and analyze entropy variation with respect to type II superization (compound formation) only. 

We add two more convolutional layers to increase the receptive field of the network such that it can build more complex type II supersigns and simply have more layers to study the spatial entropy. The architectural details are depicted in Table \ref{table:customnet}. 

We train this network on CIFAR-10 for $20$ epochs, with the Stochastic Gradient Descent (SGD) optimizer and a learning rate of $0.01$, until it reaches $\approx 72\%$ accuracy on the test set, and then use it to generate the saliency maps. The accuracy performance is less relevant, since in this experiment we focus on the variation of the entropy. In Table \ref{table:custom} we can observe that the entropy does not vary too much for both the pretrained and random versions, but the random one exhibits much larger values.

\begin{center}
\begin{tabular}{ |c| |c| |c| |c| }
 \hline
 \multicolumn{4}{|c|}{Custom Network} \\
 \hline
 Layer & Kernel size & Input channels & Output channels \\
 \hline
Conv1 + ReLU & $3 \times 3$ & 3 & 6  \\
Conv2 + ReLU & $3 \times 3$ & 6 & 16 \\
Conv3 + ReLU & $3 \times 3$ & 16 & 24 \\
Conv4 + ReLU & $3 \times 3$ & 24 & 32 \\
Fc1 + ReLU & - & $32 \times 32 \times 32$ &  120\\
Fc2 + ReLU & - & 120 & 84 \\
Fc3 & - & 84 & 10 \\
\hline
\end{tabular}
\captionof{table}{Custom network architecture for the CIFAR-10 use-case}
\label{table:customnet}

\vspace{20pt}

\begin{tabular}{ |c| |c| |c| }
 \hline
 \multicolumn{3}{|c|}{Custom network with no spatial downsampling} \\
 \hline
 Layer & Pretrained & Random \\
 \hline
conv1 & 0.4273 & 0.5948 \\
relu1 & 0.4454 & 0.6062 \\
conv2 & 0.4505 & 0.5802 \\
relu2 & 0.5025 & 0.6189 \\
conv3 & 0.4354 & 0.6101 \\
relu3 & 0.4661 & 0.6123 \\
conv4 & 0.4187 & 0.5674 \\
relu4 & 0.4415 & 0.5798 \\
\hline
\end{tabular}
\captionof{table}{Entropy values for saliency maps for the custom network at different levels in the network}
\label{table:custom}
\end{center}

\section{CNN architecture optimization} \label{sec:visualization}
It is known that modern neural network architectures are overparametrized \cite{DBLP:journals/corr/abs-1902-04674}, and so, an important emerging trend in deep learning is the optimization of such deep neural networks to satisfy various hardware constraints. An overview of such optimization techniques can be found in \cite{DBLP:journals/corr/abs-1710-09282, 8114708}. Among them, pruning is regarded as a fundamental method which has been studied since the late 1980s \cite{NIPS1988_119}, and consists of reducing redundant operations by means of removing unnecessary or weak connections at the level of weights or layers. In the last couple of years, the state of the art pruning methods have advanced considerably and are now capable of reducing the computational overhead of a deep neural network by a few times without incurring any loss in accuracy \cite{blalock2020state}.

The experiments described in Section \ref{sec:experiments} showed that the spatial entropy of the CNN saliency maps generally decreases layer by layer, and we can relate this to semiotic superization. We  aim to show how this interpretation could also help to optimize (or simplify) the architecture of the network. We perform an ablation study to see if we can determine redundant layers for pruning based on the spatial entropy information of the saliency maps. It is beyond the scope of this paper to systematically compare our approach with other CNN architecture optimization techniques. We only explore this area as a proof-of-concept, since it is the first time that such a semiotic method is used for neural architecture optimization.

On the VGG16 network, we iteratively apply the following greedy algorithm: (i) train the network on CIFAR-10 using the SGD optimizer with a learning rate of $0.01$; (ii) compute the spatial entropy for each saliency map; (iii) remove a layer for which the entropy does not decrease; and (iv) repeat steps (i)-(iii) until the performance does not degrade too much. 

From the results (see Figure \ref{fig:iterative}), we notice that up to 8 convolutional layers can be completely removed from the network, and this affects the performance by less than $1\%$. When removing the $9^{th}$ layer, the accuracy decreases significantly; therefore, we stop the iterative process at this stage.

An interesting finding is that the order in which we remove layers matters significantly. If small layers with few parameters from the beginning of the network are removed first, the accuracy goes down by $2\%$ after the $3^{rd}$ removal. When removing from the big (over-parametrized) layers starting from the mid-end level of the network, the accuracy is maintained. The accuracy degrades especially fast after the $2^{nd}$ convolutional layer with 64 output channels is removed. 

Our explanation is that the first two convolutional layers are crucial for the downstream performance of the network. This first part of a network, before a subsampling operation is applied, is known in the literature as stem \cite{DBLP:journals/corr/SzegedyIV16}. Some variants of ResNets implement this stem as three $3 \times 3$ convolutional layers or a big $7 \times 7$ layer. These early layers are responsible with detecting low level features like edge detectors. Having only a $3 \times 3$  convolutional layer, instead of two or three, means that the receptive field before the first max-pooling operation is $3 \times 3$, which might be too small to properly detect basic strokes and edges.

\begin{center}
\includegraphics[width=0.6\textwidth]{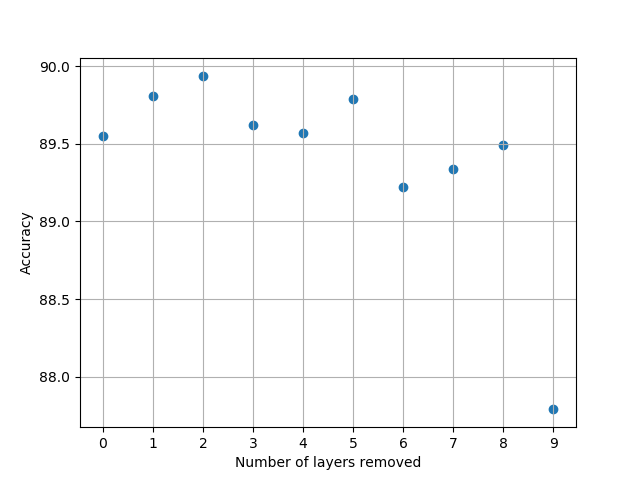}
\captionof{figure}{Accuracy measurements of VGG16 on CIFAR-10 as more layers are removed.}
\label{fig:iterative}
\end{center}

The resulted network has the following configuration: 64, 64, M, 128, M, 256, M, 512, M, M, where "M" stands for max-pooling and the integers represent a convolutional layer with the respective number of output channels, followed by a ReLU non-linearity. The fully connected layers do not change from the original architecture. We compare our resulted network with VGG11, which is the smallest architecture from the VGG family. The results are displayed in Table \ref{table:vgg_comparisons}. It can be noticed that, even when reducing the network capacity by a factor of approximately $7.5 \times$, the accuracy is still maintained, meaning that the network is too over-parametrized for this task. 

\begin{center}
\begin{tabular}{ |c | c | c | }
 \hline
 Network & Number of parameters & Accuracy \\
\hline
\hline
 & &\\
VGG16 & 15.245.130 & $89.55\%$ \\
 & &\\
\hline
 & &\\
VGG11 & 9.750.922 & $87.83\%$ \\
 & &\\
\hline

VGG16 & & \\
after 4 layers & 9.345.354 & $89.57\%$ \\
removed & & \\
\hline

VGG16 & & \\
after 8 layers & 2.118.346 & $89.49\%$ \\
removed & & \\
\hline

\end{tabular}
\captionof{table}{Comparisons on CIFAR-10 - top 1 accuracy between VGG16, VGG11 (the smallest configuration from the VGG family),  VGG16 after 4 layers removed (which has roughly the same number of parameters as VGG11) and VGG16 after 8 layers removed (which is the smallest configuration which maintains the accuracy within $1\%$ difference). }
\label{table:vgg_comparisons}
\end{center}

To check that the configuration translates to other tasks as well, we also trained the network on CIFAR-100 and compared it with the full VGG16's performance. For the full VGG16 network we obtain an accuracy of $62.61\%$, whereas for the optimized VGG architecture we get $63.78\%$. As can be seen, the small network even slightly improves the performance of the full network, while being much smaller.

\begin{figure}
\begin{center}
\begin{subfigure}{0.09\textwidth}
\includegraphics[width=1.0\linewidth]{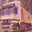} 
\end{subfigure}
\begin{subfigure}{0.09\textwidth}
\includegraphics[width=1.0\linewidth]{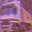} 
\end{subfigure}
\begin{subfigure}{0.09\textwidth}
\includegraphics[width=1.0\linewidth]{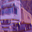} 
\end{subfigure}
\begin{subfigure}{0.09\textwidth}
\includegraphics[width=1.0\linewidth]{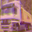} 
\end{subfigure}
\begin{subfigure}{0.09\textwidth}
\includegraphics[width=1.0\linewidth]{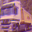} 
\end{subfigure}
\begin{subfigure}{0.09\textwidth}
\includegraphics[width=1.0\linewidth]{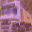} 
\end{subfigure}
\begin{subfigure}{0.09\textwidth}
\includegraphics[width=1.0\linewidth]{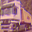} 
\end{subfigure}
\begin{subfigure}{0.09\textwidth}
\includegraphics[width=1.0\linewidth]{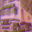} 
\end{subfigure}
\begin{subfigure}{0.09\textwidth}
\includegraphics[width=1.0\linewidth]{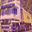} 
\end{subfigure}

\vspace{1 mm}

\begin{subfigure}{0.09\textwidth}
\includegraphics[width=1.0\linewidth]{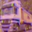}
\end{subfigure}
\begin{subfigure}{0.09\textwidth}
\includegraphics[width=1.0\linewidth]{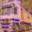}
\end{subfigure}
\begin{subfigure}{0.09\textwidth}
\includegraphics[width=1.0\linewidth]{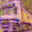}
\end{subfigure}
\begin{subfigure}{0.09\textwidth}
\includegraphics[width=1.0\linewidth]{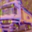}
\end{subfigure}
\begin{subfigure}{0.09\textwidth}
\includegraphics[width=1.0\linewidth]{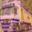}
\end{subfigure}
\begin{subfigure}{0.09\textwidth}
\includegraphics[width=1.0\linewidth]{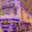}
\end{subfigure}
\begin{subfigure}{0.09\textwidth}
\includegraphics[width=1.0\linewidth]{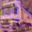}
\end{subfigure}
\begin{subfigure}{0.09\textwidth}
\includegraphics[width=1.0\linewidth]{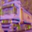}
\end{subfigure}
\begin{subfigure}{0.09\textwidth}
\includegraphics[width=1.0\linewidth]{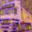}
\end{subfigure}

\vspace{1 mm}

\begin{subfigure}{0.09\textwidth}
\includegraphics[width=1.0\linewidth]{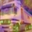}
\end{subfigure}
\begin{subfigure}{0.09\textwidth}
\includegraphics[width=1.0\linewidth]{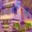}
\end{subfigure}
\begin{subfigure}{0.09\textwidth}
\includegraphics[width=1.0\linewidth]{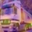}
\end{subfigure}
\begin{subfigure}{0.09\textwidth}
\includegraphics[width=1.0\linewidth]{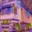}
\end{subfigure}
\begin{subfigure}{0.09\textwidth}
\includegraphics[width=1.0\linewidth]{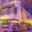}
\end{subfigure}
\begin{subfigure}{0.09\textwidth}
\includegraphics[width=1.0\linewidth]{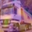}
\end{subfigure}
\begin{subfigure}{0.09\textwidth}
\includegraphics[width=1.0\linewidth]{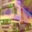}
\end{subfigure}
\begin{subfigure}{0.09\textwidth}
\includegraphics[width=1.0\linewidth]{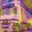}
\end{subfigure}
\begin{subfigure}{0.09\textwidth}
\includegraphics[width=1.0\linewidth]{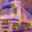}
\end{subfigure}

\vspace{1 mm}

\begin{subfigure}{0.09\textwidth}
\includegraphics[width=1.0\linewidth]{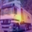}
\end{subfigure}
\begin{subfigure}{0.09\textwidth}
\includegraphics[width=1.0\linewidth]{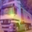}
\end{subfigure}
\begin{subfigure}{0.09\textwidth}
\includegraphics[width=1.0\linewidth]{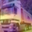}
\end{subfigure}
\begin{subfigure}{0.09\textwidth}
\includegraphics[width=1.0\linewidth]{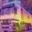}
\end{subfigure}
\begin{subfigure}{0.09\textwidth}
\includegraphics[width=1.0\linewidth]{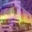}
\end{subfigure}
\begin{subfigure}{0.09\textwidth}
\includegraphics[width=1.0\linewidth]{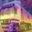}
\end{subfigure}
\begin{subfigure}{0.09\textwidth}
\includegraphics[width=1.0\linewidth]{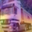}
\end{subfigure}
\begin{subfigure}{0.09\textwidth}
\includegraphics[width=1.0\linewidth]{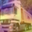}
\end{subfigure}
\begin{subfigure}{0.09\textwidth}
\includegraphics[width=1.0\linewidth]{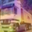}
\end{subfigure}

\vspace{1 mm}

\begin{subfigure}{0.09\textwidth}
\includegraphics[width=1.0\linewidth]{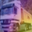}
\end{subfigure}
\begin{subfigure}{0.09\textwidth}
\includegraphics[width=1.0\linewidth]{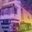}
\end{subfigure}
\begin{subfigure}{0.09\textwidth}
\includegraphics[width=1.0\linewidth]{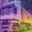}
\end{subfigure}
\begin{subfigure}{0.09\textwidth}
\includegraphics[width=1.0\linewidth]{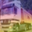}
\end{subfigure}
\begin{subfigure}{0.09\textwidth}
\includegraphics[width=1.0\linewidth]{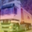}
\end{subfigure}
\begin{subfigure}{0.09\textwidth}
\includegraphics[width=1.0\linewidth]{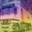}
\end{subfigure}
\begin{subfigure}{0.09\textwidth}
\includegraphics[width=1.0\linewidth]{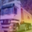}
\end{subfigure}
\begin{subfigure}{0.09\textwidth}
\includegraphics[width=1.0\linewidth]{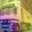}
\end{subfigure}
\begin{subfigure}{0.09\textwidth}
\includegraphics[width=1.0\linewidth]{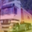}
\end{subfigure}

\caption{Saliency maps of a truck image from the CIFAR-10 dataset. Each row represents a layer within the CNN. Each column represents a new network configuration from which we removed one layer from the previous structure, as detailed above. We used the "plasma" effect to indicate the hotness of the saliency map, where the transition from purple to yellow denotes more important regions.}
\label{fig:saliency_removal}
\end{center}
\end{figure}

We can visualize (Figure \ref{fig:saliency_removal}) this iterative removal experiment by plotting the saliency maps for a CIFAR-10 image from the "truck" class at different key layers, where the spatial entropy value saw a large drop from the previous layer, starting from the full VGG16 network and removing one layer at a time. The rows in Figure \ref{fig:saliency_removal} represent layers at a particular depth, while the columns different architecture configurations found by the iterative method described above. In compliance with the theory of semiotic superization, it is visible how supersigns are gradually formed, layer by layer, from simpler supersigns. We observe this phenomenon from the fact that yellow regions (which denote pixel importance) become more structured and connected as we traverse through the layers. If we compare the saliency maps from the first column (corresponding to the full network) to the ones preceding them, we notice that the overall structure is maintained across all architecture configurations. This suggests that semiotic superization takes place inside a deep neural network regardless of the architecture of the network.

\section{Conclusions} \label{sec:conclusions}
We introduced a novel computational semiotics interpretation of CNN architectures based on the statistical aspects of the information concentration processes (semiotic superizations) which appear in the saliency maps of successive CNN layers. At the core of a CNN, the two types of superization co-exist. According to our results, the first type of superization is effective at decreasing the spatial entropy. Type II superization is more responsible for building supersigns with semantic roles.

Beyond the exploratory aspect of our work, our main insights are twofold. On the knowledge extraction side, the obtained interpretation can be used to visualize and explain decision  processes within CNN models. On the neural model optimization side, the question is how to use the semiotic information extracted from saliency maps to optimize the architecture of the CNN. We were able to significantly simplify the architecture of a CNN employing a semiotic greedy technique. While this optimization process can be slow, our work tries to use the notion of computational semiotics to prune an existing state of the art network in a top-down approach instead of constructing one using a bottom-up approach like neural architecture search. Thorough analysis has to be done in future work to consider other network architectures and robustness of the method.

%Satosi Watanabe \cite{Watanabe1969} introduced the expression $H(Z) - H(X) - H(Y)$ as an interdependence measure between components $X$ and $Y$. This measure is  similar to the type 2 superization. Can the minimization of this interdependence measure be used as a superization criterion for the optimization of the hyperparameters of a CNN classifier? We leave this for future work.

Some computational improvements for calculating the spatial entropy were proposed by Razlighi \emph{et al.} \cite{Razlighi2009a, Razlighi2011}. The computational overhead can be significantly reduced if we accept a reduction of the approximation accuracy. We plan to use this trick in the future.

In this work we considered only one type of neural network topology: CNNs. Since CNNs are mostly suited for images, those became the subject of our study. In the future, we intend to study the connection with other fields (audio, text) and architecture types (recurrent neural networks). The semiotic approach can be extended to other deep learning models, since semiotic superization appears to be present in many architectures. The computational semiotics approach is very promising especially for the explanation and optimization of deep networks, where multiple levels of superization are implied.

\end{document}